\newcommand\numberthis{\addtocounter{equation}{1}\tag{\theequation}}
\newtheorem{definition}{Definition}
\newtheorem{theorem}{Theorem}
\newcommand\nocaption{%
    \renewcommand\p@subfigure{}
    \renewcommand\thesubfigure{\thefigure\alph{subfigure}}
}
\newcommand{\thickhline}{%
    \noalign {\ifnum 0=`}\fi \hrule height 1pt
    \futurelet \reserved@a \@xhline
}
\newtheorem{assumption}{Assumption}
\newtheorem{proposition}{Proposition}
\title{\LARGE \bf Constraint Inference in Control Tasks from Expert Demonstrations via Inverse Optimization}
\author{Dimitris Papadimitriou and Jingqi Li\thanks{The authors would like to thank Prof. S. Sojoudi for her insightful comments. The authors are with the University of California, Berkeley. Correspondence to \href{mailto:dimitri@berkeley.edu}{\tt dimitri@berkeley.edu}}} 
\date{\today}
\begin{document}

\maketitle
\thispagestyle{empty}
\pagestyle{empty}

\begin{abstract}
    Inferring unknown constraints is a challenging and crucial problem in many robotics applications. When only expert demonstrations are available, it becomes essential to infer the unknown domain constraints to deploy additional agents effectively. 
    In this work, we propose an approach to infer affine constraints in control tasks after observing expert demonstrations. We formulate the constraint inference problem as an inverse optimization problem, and we propose an alternating optimization scheme that infers the unknown constraints by minimizing a KKT residual objective. We demonstrate the effectiveness of our method in a number of simulations, and show that our method can infer less conservative constraints than a recent baseline method, while maintaining comparable safety guarantees.
\end{abstract}


\section{Introduction}
Specifying reward functions for Control and Reinforcement Learning (RL) tasks is a nontrivial process. One approach to specify such reward functions is via inverse Reinforcement Learning (IRL), in which expert demonstrations are used to infer unknown reward functions~\cite{ng2000algorithms}. Once a reward function is obtained, then a policy can be derived. 
Obtaining policies in constrained environments via Control or RL is a fairly manageable task when those constraints are a priori known. In many cases, specifying a reward function is intuitive. For instance, a robot could use the inverse distance to its target destination as a reward function. However, specifying the corresponding constraints for the task at hand is not always straightforward.

One possible avenue to deal with learning policies under unknown constraints is the use of Imitation Learning (IL) algorithms \cite{hussein2017imitation}. Simply imitating an expert agent though, has a number of downsides. First, the inferred policy applies only to the demonstrated task and may not be generalized to new ones. Second, an inferred policy under a specific dynamics model is not guaranteed to be applicable to an agent with different transition dynamics. On the other hand, given the expert demonstrations, we could first infer the constraints and then learn safe policies. The problem of constraint inference in RL is mostly a new and active research area with a number of methods proposed so far \cite{liu2022benchmarking}. 

Constraint inference in control applications is an important problem that has not been extensively studied \cite{chou2020learning}. Given the fact that most control tasks are modeled as an optimization problem, elements from inverse optimization theory can be utilized to infer constraints. In Inverse Optimization (IO), the parameters of an optimization problem are inferred given access to its optimal solution \cite{chan2021inverse}. Although the majority of work revolves around inferring parameters in the objective functions, a parallel to reward learning in RL, research has recently focused on inferring constraint related parameters. 

In this work, we propose a method for inferring constraints in control tasks, based on expert demonstrations. 
Our contributions can be summarized as follows: \textit{1)} We show that the unknown affine constraints can be recovered exactly from the expert trajectory data, when we know the exact number of unknown constraints and {the states for which the constraints are binding}; \textit{2)} When prior knowledge about the time instances at which the constraints are binding is not available, we develop
a method to infer the unknown constraints, by minimizing the KKT residual of the optimal control problem, which could be solved more efficiently than prior works that rely on Mixed Integer Linear Programming (MILP); 
\textit{3)} To deal with the challenge of not knowing the exact number of unknown constraints, we develop a greedy algorithm which sequentially infers an affine constraint such that the KKT residual is minimized. We empirically demonstrate, that our method is robust to observation noise and suboptimal demonstrations and it can infer less conservative constraints than a state-of-the-art baseline method with same-level safety guarantees.

The rest of the paper is organized as follows. In Section \ref{sec:rel_work}, we review the relevant literature. In Section~\ref{sec:prob_form}, we formulate the constraint inference problem. Our main results are presented in Section \ref{sec:cnstr_infer}. In Section \ref{sec:simul}, we demonstrate the performance of our method in simulations. 



\section{Related Work}\label{sec:rel_work}
This section outlines the existing work in constraint inference in RL and Control. We also elaborate on relevant work from the IO literature as this is particularly relevant to our framework.

\textit{Constraint Inverse Reinforcement Learning (CIRL)}: In the context of RL, \cite{scobee2019maximum} proposed a greedy method to incrementally add constraints in discrete state and actions spaces. This is done by assuming access to the reward function and hence, the likelihood of each state or action being constrained can be evaluated. At each iteration, the most likely state or action is classified as a constraint following a maximum likelihood criterion. A continuous state space framework is presented in~\cite{malik2021inverse} in which the authors propose a method to learn a constraint classifier. Finally, in~\cite{papadimitrioubayesian} the authors propose a Bayesian approach to infer constraints. Estimating the full constraint posterior distribution further allows for the quantification of estimation uncertainty.  

\textit{Constraint Inference in Control}:
In the context of robotics, the authors in~\cite{perez2017c} propose a method that infers constraints based on expert demonstrations of a robot interacting with items. The inferred constraints are chosen from a catalog of parametric constraint models so that under them, the robot demonstrations are reproducible.  In~\cite{chou2018learning}, cell occupancies of expert demonstrations are calculated and later used in an integer programming formulation to provide estimates of the constraints. Constraints can also be modeled in the form of control barrier functions. The authors in~\cite{robey2020learning} use expert demonstrations to learn control barrier functions that enjoy provable safety guarantees. 

Moreover, the authors in~\cite{molloy2020online, agrawal2021learning} infer parameters in the objective function utilized to obtain control policies in constraint environments. However, it should be noted that unknown parameters exist only in the objective function and not in the constraints. The KKT optimality conditions of an optimization problem can also be used to infer constraint sets. In~\cite{menner2019constrained}, after candidate constraint sets are constructed, the Lagrange multipliers of the IO problem are used to infer constraints. The closest work to our framework is~\cite{chou2020learning}, in which the authors use the KKT conditions to construct safe and unsafe areas of the state-action space, and hence implicitly the constraints themselves. Although this approach can also infer uncertainty in the objective function, we choose to focus solely on the constraint recovery task.

\textit{Inverse Optimization}: Inverse optimization broadly refers to the problem of inferring parameters of an optimization problem by observing the optimal solution \cite{chan2021inverse, li2023cost}. The majority of the work revolves around estimating parameters in the objective function. For linear objective functions and  constraints, \cite{ahuja2001inverse} formulate the inverse optimization problem also as a linear problem. 
Regarding constraint inference, \cite{chan2020inverse} infer the left-hand side of linear constraints of a linear forward optimization problem. Finally, \cite{ghobadi2021inferring} provides a convex formulation for inferring both the left and right-hand side of linear constraints in linear optimization problems.

In our work, we base our constraint inference method on minimizing the KKT residual of the optimization problem.

\textit{KKT Residual and alternating optimization}:
In optimal control, the KKT conditions are necessary conditions for the optimality of a sequence of states and controls. Recent works \cite{englert2017inverse,awasthi2019forward,menner2020maximum} proposed to minimize the residual of the KKT conditions as a way to find the best cost function such that the computed trajectory matches the expert demonstrations. A benefit of the KKT residual formulation is that it can be convex in the unknown parameters for some classes of problems 
\cite{englert2017inverse,keshavarz2011imputing}. Moreover, the KKT residual formulation can be extended to solve the inverse optimal control problem with inequality constraints on states and controls \cite{englert2017inverse}. 
\section{Problem Formulation}\label{sec:prob_form}
In this work, we are interested in providing a methodology to infer constraints in constrained optimal control problems. We begin by formalizing the classic forward control optimization problem as follows.  
\begin{definition}\label{def:forw_cntrl}
Consider a discrete-time dynamical system ${x}_{t+1}={f}({x_t},{u_t})$  where ${x_t}\in\mathbb{R}^n$ and ${u_t}\in\mathbb{R}^m$ represent the state and control inputs at time $t\in\{0,1,2,\dots\}$, respectively. We denote by $\mathbf{u}=\{u_0,\dots, u_{T-1}\}$ and $\mathbf{x}=\{x_0,x_1,\dots,x_{T}\}$ the control and state trajectories from $t=0$ to $t=T$, respectively. The chosen starting state is denoted with $x_0$. For a control task with a quadratic cost function $c({x},{u}):\mathbb{R}^n\times \mathbb{R}^m\to \mathbb{R}$, terminal cost function $c_T({x}):\mathbb{R}^n\to \mathbb{R}$ and constraint function $g_\theta(x):\mathbb{R}^n\to \mathbb{R}$, parameterized by a vector $\theta\in\mathbb{R}^d$, we define the Forward Control Problem \textbf{FCP}$({\theta})$ as follows,
\end{definition}
\begin{equation}
    \begin{aligned}
        \text{\textbf{FCP}}(\theta):=\min_{\{u_t\}_{t=0}^{T-1}} & \sum_{t=0}^{T-1} c(x_t,u_t)+c_T(x_T)\\
        \textrm{s.t. }& x_{t+1} = f(x_t, u_t), t=0,\ldots,T-1\\
        & g_\theta(x_t)\le 0, t=0,\ldots,T.
    \end{aligned}
\end{equation}
In our work, we focus on state constraints, but our method can be easily utilized for state-action constraints as well. Having access to the optimal solution $\mathbf{u}^*:=\{u_t^*\}_{t=0}^{T-1}$ from the \textbf{FCP} we can formulate the Inverse Control Problem (\textbf{ICP}) as an optimization problem that provides an estimate of the unknown parameters, using a performance metric and constraints that involve the optimal solution. 
\begin{definition}\label{def:inv_cntrl}
Given Definition~\ref{def:forw_cntrl}, we define the Inverse Control Problem as \textbf{ICP}($\mathbf{x}^*,\mathbf{u}^*$)$\coloneqq\min_{{\theta}}$ $\{\ell_{{\mathbf{x}^*,\mathbf{u}^*}}( \theta)$ $|{h}_{\mathbf{x}^*,\mathbf{u}^*}(\theta)\leq {0}\}$ that recovers the unknown constraint parameters ${\theta}$ via an appropriately chosen loss function $\ell$ and constraints $h$, with $\mathbf{x}^*,\mathbf{u}^*$ denoting the optimal state and input solution of \textbf{FCP}. 
\end{definition}
Given these definitions, we are interested in tackling the following problem. Assume we are given a set of trajectories $\{\tau_i\}_{i=1}^{N}$, with each $\tau_i = \{x_0,u_0,x_1,u_1,$ $\dots,x_{T}\}$. What objective function $\ell$ and constraints $h$ should we use in an \textbf{ICP} formulation to recover the unknown constraint parameters ${\theta}$ efficiently? The inferred parameters $\hat{\theta}$ should be such  that the induced optimal trajectories are close to the expert demonstrations and feasible to the unknown constraints. 
\begin{assumption}\label{assum-1}
We assume that we have an expert data set composed by a number of expert state and control input trajectories, which optimally solve the \textbf{FCP}$(\theta)$. 
\end{assumption}
For the remainder of this work, we focus on the case of linear dynamics functions and on finite horizon tasks. More specifically, we consider linear dynamics of the form
\begin{equation}
    x_{t+1} = Ax_t + Bu_t,
\end{equation}
where $x_t\in\mathbb{R}^n$ and $u_t\in \mathbb{R}^m$. Furthermore,  $A\in\mathbb{R}^{n\times n}$ and $B\in\mathbb{R}^{n\times m}$ are considered known matrices. In what follows, we elaborate on the formulation of the \textbf{FCP} and \textbf{ICP} in the case of quadratic cost functions with linear constraints. 
\subsection{Quadratic Objective and Linear Constraints}\label{sec:quad_lin}
We consider control tasks of finite horizon $T$, with linear dynamics, quadratic cost functions and $M$ distinct linear constraints of the following form
    \begin{align*}\label{prob:lqr}  
    \min_{\{u_t\}_{t=0
    }^{t=T-1}}& \sum_{t=0}^{T-1} x_t^\top Q x_t + u_t^\top R u_t +x_T^\top Q x_T\\
    \textrm{s.t. }& x_{t+1} = A x_t + B u_t, \forall t\in\{0,\dots, T-1\} \numberthis \\
    & c_{i}^\top x_t \leq d_i, \forall t \in \{0,\dots,T\}, i\in\{1,\ldots,M\},
    \end{align*}
where $Q$ and $R$ are cost matrices associated with states and control inputs, respectively. Given that we are studying linear constraints, the unknown parameters $\theta$ we are interested in inferring are the $c_i\in\mathbb{R}^n$ and $d_i\in\mathbb{R}$ that parameterize the $i$-th constraint, with $i\in\{1,2,\dots,M\}$. Let $x_0$ be the initial condition and $U=[u_0,u_1\dots,u_{T-1}]^{\top}\in\mathbb{R}^{mT}$ be the compact representation of the control input sequence. The problem can be rewritten in a compact way as
    \begin{align*}\label{eq:LQR_version_1}
    \min_{ U\in\mathbb{R}^{mT}} & (GU+Hx_0)^\top (I_T\otimes Q) (GU+Hx_0) \\ & + U^\top (I_T\otimes R) U\\
    \textrm{s.t.}&\; C(GU+Hx_0) \leq D,\numberthis
    \end{align*}
where $G\in\mathbb{R}^{nT\times mT}$, $H\in\mathbb{R}^{nT\times n}$, $C\in\mathbb{R}^{MT \times nT }$ and $D\in\mathbb{R}^{MT}$ are defined as follows
    \begin{align*}\label{eq:large_mats}
    &G =\begin{bmatrix}
    B & 0 & 0 & \cdots & 0\\
    AB& B & 0 & \cdots & 0\\
    \vdots & \vdots & \vdots & \ddots &\vdots \\
    A^{T-1}B & A^{T-2}B & A^{T-3}B & \cdots &B
    \end{bmatrix},
    H = \begin{bmatrix}
    A\\ A^2 \\ \vdots\\ A^T
    \end{bmatrix},\\
    &C = \begin{bmatrix}I_{T}\otimes c_1^\top\\ \vdots \\I_{T}\otimes c_M^\top\end{bmatrix},\ \ D = \begin{bmatrix}\mathbf{1}_{T} d_1\\ \vdots \\ \mathbf{1}_{T} d_M \end{bmatrix},\numberthis
    \end{align*}
    with $I_n$ denoting the identify matrix and $\mathbf{1}_n$ a vector of ones, both of size $n$. We use for simplicity $0$ in matrix definitions to denote all zero matrices of appropriate dimensions.
\section{Constraint Inference}\label{sec:cnstr_infer}
This section introduces the constraint recovery approach using the inverse problem formulation. Initially, we present conditions for exact constraint recovery. Afterwards, we proceed to deriving a more general approach for constraint inference based on the KKT optimality conditions.
\subsection{Exact Constraint Recovery}
Under certain assumptions, exact constraint inference is possible by solving a linear system of equations. This is possible in the case where the exact number of constraints and the states for which the constraints are binding are known a priori. First, we present conditions under which constraint inference can be exact in the case where the right-hand side of the constraints $d_i,i=1,\ldots,M$ is known. Afterwards, we extend these conditions to the case where the right-hand side is also unknown, a setup that will be kept for the remainder of the paper.
\begin{proposition}\label{prop:exact_rec_known_d}
Assume the existence of $M$ total linear unknown constraints with known right-hand sides $d_i, i=1,\ldots,M$ and that for each  constraint $i=1,\ldots,M$ there are $n_i$, with $T> n_i\geq n$, states for which the constraint is binding at time steps $\mathcal{T}_i$, where $\mathcal{T}_i=\{t|c_i^{\top}x_t=d_i\}$. If, for each constraint, $n$ of these state vectors are linearly independent, then the constraint vector $c_i$ can be recovered exactly, for $i=1,\ldots, M$.
\end{proposition}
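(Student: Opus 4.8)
The plan is to reduce the recovery of each unknown constraint vector to the solution of a square, invertible linear system assembled from the binding state vectors observed in the expert trajectory. First I would fix an index $i\in\{1,\dots,M\}$ and write down the binding conditions: by the definition of $\mathcal{T}_i$, every $t\in\mathcal{T}_i$ satisfies $c_i^\top x_t = d_i$. Since $d_i$ is known and the states $x_t$ are observed, each such $t$ contributes exactly one scalar linear equation in the unknown vector $c_i\in\mathbb{R}^n$, so the $n_i$ binding time steps give an overdetermined-but-consistent linear system $X_i c_i = d_i\mathbf{1}_{n_i}$ with $X_i$ the matrix whose rows are the binding states.

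Next I would invoke the hypothesis that $n$ of the binding state vectors, say $x_{t_1},\dots,x_{t_n}$ with $t_1,\dots,t_n\in\mathcal{T}_i$, are linearly independent. Restricting to those rows yields a square subsystem $\widetilde{X}_i c_i = d_i\mathbf{1}_n$ with $\widetilde{X}_i\in\mathbb{R}^{n\times n}$ invertible, hence $c_i = d_i\,\widetilde{X}_i^{-1}\mathbf{1}_n$ is the unique solution. Consistency is automatic because the true constraint vector satisfies all binding equations by construction; therefore the vector produced by inverting the square subsystem is the true $c_i$, not merely an approximant, and it is unique. Repeating the argument for every $i\in\{1,\dots,M\}$ recovers all $M$ constraint vectors exactly.

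The points I would state carefully, rather than treat as obstacles, are the following: the assignment of each binding time step to the correct constraint index is exactly the information encoded in knowing the sets $\mathcal{T}_i$, so no combinatorial search over binding patterns is needed here; and the role of the standing conditions is that $n_i\ge n$ guarantees enough binding equations to pin down the $n$ unknown components of $c_i$, while $T>n_i$ ensures the constraint is not binding along the entire horizon (which would be a degenerate case). I do not expect a serious technical difficulty; the essence of the proof is the observation that a known right-hand side turns each binding event into a usable linear equation, and that the linear-independence hypothesis converts a collection of such equations into a uniquely solvable square system.
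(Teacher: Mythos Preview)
Your proposal is correct and follows essentially the same argument as the paper: stack the binding equations for each constraint into a linear system, restrict to the $n$ linearly independent rows to obtain an invertible square system, and read off $c_i$ uniquely. The only minor addition in the paper is the remark that the non-binding states can be used to fix the sign of the inequality, which is not strictly required by the proposition as stated.
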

\begin{proof}
For each of the constraints $i$ we select $n$ of the $n_i$ states for which the constraint is binding. Then the following holds  
\begin{align}\label{eq:orig_bind_cnstr}
C_{\mathcal{T}_i}\cdot X_{\mathcal{T}_i} = D_{\mathcal{T}_i},
\end{align}
where 
\begin{align}
&C_{\mathcal{T}_i}=\begin{bmatrix}{I}_{n}\otimes c^{\top}_{i} \end{bmatrix}, X_{\mathcal{T}_i}=\begin{bmatrix}x_{t_{i,1}}\\ \vdots \\ x_{t_{i,n}} \end{bmatrix},t_{i,1},\ldots,t_{i,n}\in\mathcal{T}_i\text{ and }\nonumber\\ &D_{\mathcal{T}_i} = \begin{bmatrix} \mathbf{1}_{n}d_i \end{bmatrix}.
\end{align}
Rearranging~\eqref{eq:orig_bind_cnstr} we obtain
\begin{align}
\tilde{X}_{\mathcal{T}_i}\cdot c_i=D_{\mathcal{T}_i},
\end{align}
where 
\begin{align}
\tilde{X}_{\mathcal{T}_i}=\begin{bmatrix} x_{t_{i,1}}^{\top}\\ \vdots \\ x^{\top}_{t_{i,n}}\end{bmatrix}.
\end{align}
Since $\tilde{X}_{\mathcal{T}_i}$ is full row rank given the linear independence of $x_{t_{i,j}}, j=1,\ldots,n$, $c_i$ can be recovered exactly. The non-binding states can be used to determine the sign of the  constraint inequality.
\end{proof}
 For the remainder of the paper we will regard the right-hand side of the constraints unknown as well. To deal with the unknown right-hand side parameter $d_i$, $i\in\{1,\dots,M\}$, we expand the state dimension by one element that corresponds to the right-hand side of the constraints to  simplify notation. More specifically, $x_t:=[x_t,1]^{\top}\in\mathbb{R}^{n+1}$ and
 \begin{align}
 A:=\begin{bmatrix}A&0\\0&1\end{bmatrix}, B:=\begin{bmatrix}B\\0\end{bmatrix}.
 \end{align}
 Now each constraint can be compactly written as $c_i^{\top}x_t\leq 0$ where $c_i^{\top}:=[c_i^{\top},-d_i]\in\mathbb{R}^{n+1}$. Exact constraint inference in this case can occur in a similar way to Proposition~\ref{prop:exact_rec_known_d} and under the same assumptions. More specifically, the right-hand-side of the constraints can be set equal to an arbitrary value and the inferred constraints will be inferred up to scale. 
The above result suggests that, we can infer the exact constraints under the strong assumptions that we know the total number of constraints and for which states the constraints are binding. In this work, we are primarily interested in providing a framework for constraint inference in challenging scenaria in which the actual number of constraints and the time steps at which the constraints are binding, if at all, are unknown. To solve this challenging problem, we utilize the following KKT residual objective formulation, which is a general framework that allows for constraint inference in the most general settings.
\subsection{General Constraint Inference via KKT Residual}\label{sec:KKT_res}
In this section we focus on the case where the actual number of constraints $M$ is known a priori. We will be extending our approach to the case where $M$ is unknown in section~\ref{sec:greedy_alg}. {In what follows, we formulate the Lagrangian for one demonstration from one initial condition. This setup can be easily extended to multiple demonstrations from distinct initial conditions by taking the sum of their Lagrangians.
} The optimization problem in~\eqref{eq:LQR_version_1} admits the following Lagrangian
\begin{align}
    &L(U,\lambda):=(GU+Hx_0)^\top (I_T\otimes Q)(GU+Hx_0)\nonumber\\&+U^\top (I_T\otimes R)U
    +\lambda^\top C(GU+Hx_0),
\end{align}
with $\lambda\in\mathbb{R}_+^{MT}$ denoting the vector of Lagrange multipliers. The KKT conditions of stationarity, complementary slackness, primal and dual feasibility are then
\begin{align}
    &\nonumber \nabla_U L(U^*,\lambda) = 2G^\top (I_T\otimes Q)(GU^*+Hx_0)+\\
    &2 (I_T\otimes R)U^*+G^\top C^\top\lambda=0 \\ 
    &\lambda^\top C(GU^*+Hx_0)=0\\
    &C(GU^*+Hx_0)\le 0\\
    &\lambda\ge 0.
\end{align}
The KKT residual formulation obtains estimates of the unknown parameters, in our case $c_i$, $i=1,\ldots,M$ and $\lambda$, by minimizing some metric of the stationarity and complementary slackness conditions, while satisfying primal and dual feasibility. More specifically,
\begin{equation}\label{opt:kkt_res}
    \begin{aligned}
    \min_{\{c_i\}_{i=1
    }^{i=M}, \lambda}&\; \ell(\{c_i\}_{i=1
    }^{i=M}, \lambda)\\
    \textrm{s.t. }& \; C_{\{c_i\}_{i=1}^M}A_2\leq 0\\
    & \lambda\geq 0,
    \end{aligned}
\end{equation}
where for conciseness we define
\begin{align}
&\ell(\{c_i\}_{i=1
    }^{i=M}, \lambda)=\|A_1+G^{\top}C_{\{c_i\}_{i=1}^M}^{\top}\lambda\|_2^2\nonumber \\&+ \| (C_{\{c_i\}_{i=1}^M}A_2)^{\top}\lambda \|_2^2 + \rho_1 \sum_{i=1}^{M}\|c_i\|_2^2+ \rho_2\|\lambda\|_2^2\label{eq:objective_KKT}\\
  &A_1= 2G^\top (I_T\otimes Q)(GU^*+Hx_0)+2 (I_T\otimes R)U^*\\
   &A_2= GU^*+Hx_0,
\end{align}
with $\rho_1,\rho_2$ being regularization parameters. Given that the constraints $c_i$ are now unknown, we use the notation $C_{\{c_i\}_{i=1}^M}$ to denote the function that maps the $c_i$s as follows
\begin{align}
C_{\{c_i\}_{i=1}^M} := \begin{bmatrix}I_{T}\otimes c_1^\top\\ \vdots \\I_{T}\otimes c_M^\top\end{bmatrix}.
\end{align}
It should be noted that this is a biconvex optimization problem in $c_i, {i=1,\ldots,M}$ and $\lambda$~\cite{gorski2007biconvex}. The most common approach for solving biconvex optimization problems, is via an alternating optimization scheme, which we outline in the following section. 
\subsection{Alternating Optimization Formulation}\label{sec:alter_opt}
Observing Problem~\eqref{opt:kkt_res}, it is evident that by fixing one of the variables the problem becomes a constrained least squares problem with respect to the other variable. Without loss of generality, in this section, we focus on the case where $M=1$ with the only constraint being $c\in\mathbb{R}^{n+1}$. More specifically, by fixing {$c$} Problem~\eqref{opt:kkt_res} becomes 
\begin{equation}\label{eq:min_lambda}
    \begin{aligned}
    &\min_{\lambda}\; \|\tilde{A}\lambda-\tilde{b}\|_2^2 + \rho_2\|\lambda\|_2^2 \\
    & \textrm{s.t.}\;\lambda \geq {{0}},
    \end{aligned}
\end{equation}
where
\begin{align*}
    \tilde{A}=\begin{bmatrix} G^{\top}C_{\{c\}}^{\top}\\(C_{\{c\}}A_2)^{\top}
    \end{bmatrix},\tilde{b}=\begin{bmatrix}-A_1\\0\end{bmatrix} \text{ and } C_{\{c\}}=[I_T\otimes c].
\end{align*}
On the other hand, when minimizing over $c$, problem~\eqref{opt:kkt_res} can be rewritten as
\begin{equation}\label{eq:min_c}
    \begin{aligned}
    &\min_{c}\; \|\tilde{\tilde{A}}c-\tilde{\tilde{b}}\|_2^2 +\rho_1 ||c||_2^2\\
     &\textrm{s.t. }\; C_{\{c\}}A_2\leq {0},
    \end{aligned}
\end{equation}
where
\begin{align*}
    \tilde{\tilde{A}}=\begin{bmatrix} G^{\top}\Lambda\\A_2^{\top}\Lambda
    \end{bmatrix},\tilde{\tilde{b}}=\begin{bmatrix}-A_1\\0\end{bmatrix},
\end{align*}
and ${\Lambda}$ is a constructed sparse matrix with elements the coordinates of $\lambda$, such that $C_{\{c\}}^{\top}\lambda={\Lambda}c$. The reformulation using matrix  $\Lambda$ is solely done to show that Problem~\ref{eq:min_c} is indeed a least squares problem on $c$, with constraints that are also linear in $c$. 

Problems~\eqref{eq:min_lambda} and~\eqref{eq:min_c} included in the objective function a regularization term to ensure a unique solution. The detailed process of the alternating optimization approach can be seen in Algorithm~\ref{algo:alter}. For a number of $K$ iterations, we solve Problems~\eqref{eq:min_lambda} and~\eqref{eq:min_c} in an alternating way. 
\begin{algorithm}[h]
\caption{Alternating Minimization Algorithm}\label{alg:alg_alter}
\begin{algorithmic}[1]
\State \textbf{Initialize randomly:} $c\in\mathbb{R}^{n+1}$
\For{$K =1,\ldots$}
    \State Solve Problem~\eqref{eq:min_lambda}
    \State Update $\lambda$ value
    \State Solve Problem~\eqref{eq:min_c}
    \State Update $c$ value
\EndFor
\State \textbf{Output:} $c, \lambda$
\end{algorithmic}
\label{algo:alter}
\end{algorithm}
\begin{theorem}\label{thm:conv}
Algorithm~\ref{alg:alg_alter} converges to a stationary point of \eqref{opt:kkt_res}.
\end{theorem}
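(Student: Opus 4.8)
The plan is to recognize that Algorithm~\ref{alg:alg_alter} is an instance of block coordinate descent (alternating minimization) applied to the objective $\ell$ over the two blocks $c$ and $\lambda$, where each block update is an \emph{exact} minimization over a closed convex feasible set. So the first step is to set up the notation carefully: write $F(c,\lambda) := \ell(c,\lambda)$ for the objective in \eqref{eq:objective_KKT} (with $M=1$) and let $\mathcal{C}(\lambda) = \{c : C_{\{c\}}A_2 \le 0\}$ and $\Lambda = \{\lambda : \lambda \ge 0\}$ be the feasible sets appearing in \eqref{eq:min_c} and \eqref{eq:min_lambda}. Note that the joint feasible region of \eqref{opt:kkt_res} has a product-like structure in the sense that the constraint $C_{\{c\}}A_2\le 0$ involves only $c$ and the constraint $\lambda\ge 0$ involves only $\lambda$; this separability of the constraints across the two blocks is exactly what makes alternating minimization well-defined here.

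Second, I would record the two structural facts that drive the convergence argument: (i) each subproblem \eqref{eq:min_lambda} and \eqref{eq:min_c} is a strongly convex quadratic program (strong convexity coming from the regularizers $\rho_2\|\lambda\|_2^2$ and $\rho_1\|c\|_2^2$ with $\rho_1,\rho_2>0$) over a nonempty closed convex set, hence has a \emph{unique} minimizer — so the iterates are well-defined and the per-block minimum is attained; and (ii) consequently the sequence of objective values $F(c^{(k)},\lambda^{(k)})$ is monotonically non-increasing and bounded below by $0$, hence convergent. The strong convexity in each block, combined with boundedness of the sublevel set $\{(c,\lambda): F(c,\lambda)\le F(c^{(0)},\lambda^{(0)}), \lambda\ge 0\}$ (bounded because $F$ is coercive in $(c,\lambda)$ jointly thanks to the regularizers), gives that the iterate sequence itself is bounded and therefore has convergent subsequences.

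Third, I would invoke the standard two-block coordinate-descent convergence theorem — e.g.\ Grippo--Sciandrone, or the classical result for block-Gauss--Seidel with two blocks and uniquely attained minimizers (this is one of the few cases where no additional convexity/regularity beyond the two-block structure is needed). The argument is: along a convergent subsequence $(c^{(k_j)},\lambda^{(k_j)}) \to (c^\star,\lambda^\star)$, use the monotonicity of $F$ and continuity to pass to the limit in the optimality (first-order) conditions of each block update, concluding that $c^\star$ minimizes $F(\cdot,\lambda^\star)$ over $\mathcal{C}(\lambda^\star)$ and $\lambda^\star$ minimizes $F(c^\star,\cdot)$ over $\Lambda$. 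Because the constraints decouple across blocks, these two block-wise stationarity conditions together are exactly the KKT/stationarity conditions for the full problem \eqref{opt:kkt_res} at $(c^\star,\lambda^\star)$, so $(c^\star,\lambda^\star)$ is a stationary point. A small extra step handles the fact that the feasible set $\mathcal{C}(\lambda)$ for $c$ does not actually depend on $\lambda$ (it is the fixed set $\{c: C_{\{c\}}A_2\le 0\}$), which only simplifies the limiting argument.

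The main obstacle I expect is \emph{not} the monotone-descent part, which is routine, but making the limiting/stationarity step fully rigorous: one must argue that the projected-gradient-type optimality condition for each subproblem is preserved under the subsequential limit, and that the union of the two limiting block conditions genuinely implies joint stationarity of the (nonconvex, because bilinear) problem \eqref{opt:kkt_res}. This requires either citing the appropriate coordinate-descent theorem with its hypotheses verified (uniqueness of block minimizers — which we have — plus continuity of $F$), or else spelling out the argument that at a point which is block-optimal in each of two separable-constraint blocks, the concatenated multiplier/gradient conditions coincide with the full KKT system. I would also note the mild caveat that ``stationary point'' here should be read as a point satisfying the first-order necessary conditions (a coordinate-wise / blockwise minimum), not necessarily a local minimum of the bilinear objective, and state this explicitly so the theorem statement is not over-claimed.
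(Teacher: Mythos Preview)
Your proposal is correct and follows essentially the same route as the paper: both arguments establish that each subproblem \eqref{eq:min_lambda} and \eqref{eq:min_c} has a unique minimizer thanks to the strong convexity induced by the regularizers $\rho_1\|c\|_2^2$ and $\rho_2\|\lambda\|_2^2$, and then invoke a standard block-coordinate-descent convergence result (the paper cites Proposition~2.7.1 of Bertsekas, while you point to Grippo--Sciandrone / block Gauss--Seidel) to conclude convergence to a stationary point. Your write-up is more detailed---spelling out coercivity, boundedness of iterates, the subsequential limiting argument, and the role of the decoupled constraints---but the key ideas coincide.
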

\begin{proof}
For the theorem to hold, Problem~\eqref{eq:min_lambda} and  Problem~\eqref{eq:min_c} must have unique solutions~\cite{bertsekas1999nonlinear}. Problem~\eqref{eq:min_lambda} is a regularized constrained least squares problem in which the objective function is strictly convex.
Thus, Problem~\eqref{eq:min_lambda} admits an unique solution. Similarly, Problem~\eqref{eq:min_c}  also admits a unique solution due to strong convexity. Convergence then follows from Proposition 2.7.1 in~\cite{bertsekas1999nonlinear}. The result is shown for the case when we infer only one constraint because, as it will become clear is subsequent sections, we utilize Algorithm~\ref{algo:alter} to infer one constraint at a time. 
\end{proof}
\subsection{Greedy Constraint Inference}\label{sec:greedy_alg}
The previous sections, have tackled the scenario of unknown constraints with an a priori known number of constraints in the environment. However, in realistic scenaria, such cases are rare and thus we need a heuristic approach that infers constraints when their number is unknown. In that direction, we propose using the KKT residual objective as an estimate of the quality of constraint inference. In what follows, we propose a greedy constraint inference algorithm, that incrementally adds constraints until a certain criterion is met.

The Incremental Greedy Constraint Inference (IGCI) approach, shown in
Algorithm~\ref{alg:al_greedy}, utilizes at each iteration Algorithm~\ref{alg:alg_alter} to incrementally add a constraint in the set of inferred constraints. The method starts with an empty set of constraints $\mathcal{C}$. After a constraint $c_i$ and the Lagrange multiplier have been inferred following Algorithm~\ref{alg:alg_alter}, the constraint is temporarily added to the constraint set $\mathcal{C}$. Then the KKT residual objective $\ell(\{c_i\}_{c_i\in\mathcal{C}
    }, \lambda)$ is evaluated. If the difference between this value and the KKT residual in the previous iteration, which is denoted with $E$ in Algorithm~\ref{alg:al_greedy}, is above a user-specified threshold $\delta$, then the constraint $c_i$ remains in the constraint set. The process is repeated until this difference drops below the predefined threshold at which point the process terminates, without including the last inferred constraint in $\mathcal{C}$.
\begin{algorithm}[h]
\caption{IGCI}
\begin{algorithmic}[1]
\State \textbf{Parameters:} Convergence threshold $\delta$, $E$\(>\!\!>\)$1$
\State \textbf{Initialize:} $\mathcal{C}'=\{\}$
\For{$N_c =1,\ldots$}
\State Obtain $c_{N_c}\in\mathbb{R}^{n+1}$ and $\lambda\in\mathbb{R}^{N_cT}$ from Algorithm~\ref{alg:alg_alter}
\State $\mathcal{C}'\gets\mathcal{C}'\cup \{c_{N_c}\}$
\State $\lambda'\gets\lambda$
\If{$|\ell(\{c\}_{c\in\mathcal{C}'}, \lambda')-E|\leq \delta$ }
      \State break
\Else
    \State $\mathcal{C}\gets\mathcal{C}', \;\lambda\gets\lambda'$
    \State $E\gets \ell(\{c\}_{c\in\mathcal{C}}, \lambda)$
\EndIf
\EndFor
\State \textbf{Output:} $\mathcal{C}, \lambda$
\end{algorithmic}
\label{alg:al_greedy}
\end{algorithm}
When the algorithm terminates, the set of $N_c$ inferred constraints $\mathcal{C}=\{\tilde{c}_1,\ldots,\tilde{c}_{N_c}\}$ is returned. It should be noted that $N_c$ need not be equal to the original number of constraints $M$. Furthermore, IGCI will add at least one constraint given that the criterion in line $7$ will not be satisfied in the first iteration. This can be modified if needed, by appropriately choosing a smaller initial value for $E$. Finally, in line $4$, {Algorithm~\ref{algo:alter} can be run multiple independent times in order to obtain a better solution.}

At an arbitrary iteration $k$ of IGCI, in line $4$ we obtain the estimate for an additional constraint and Lagrange multiplier. This is achieved by solving~\eqref{opt:kkt_res} using Algorithm~\ref{alg:alg_alter} with the unknowns being $c_k$ and the multiplier $\lambda\in\mathbb{R}^{kT}$, while fixing the values of the already inferred constraints, $\tilde{c}_i,\;i=1,\ldots,k-1$. Although further fixing the already inferred Lagrange multipliers and only inferring an additional $\lambda\in\mathbb{R}^T$ at each iteration works well, our simulations showed that inferring at iteration $k$ the Lagrange multipliers for all constraints $i=1,\ldots,k$ gives slightly more accurate results.

Finally, the analysis so far has been based on a single optimal demonstration $U^*$. In subsequent sections, we allow for inference from multiple demonstrations by modifying $\tilde{A}, \tilde{b}, \tilde{\tilde{A}}$ and $\tilde{\tilde{b}}$ in problems~\eqref{eq:min_lambda} and~\eqref{eq:min_c} appropriately. More specifically, for $N$ demonstrations $U^*_1,\ldots,U^*_N$ {starting from states $x_0^{(1)},\ldots,x_0^{(N)}$ }and by denoting
\begin{align}
  &A_1^{(i)}= 2G^\top (I_T\otimes Q)(GU_i^*+H{x_0^{(i)}})+2 (I_T\otimes R)U_i^*\\
   &A_2^{(i)}= GU_i^*+H{x_0^{(i)}},
\end{align}
the new $\tilde{A}$ and $\tilde{b}$ for Problem~\ref{eq:min_lambda} now become
\begin{align*}
    \tilde{A}=\begin{bmatrix} G^{\top}C_{\{c\}}^{\top}\\(C_{\{c\}}A^{(1)}_2)^{\top}\\
    \vdots\\
    G^{\top}C_{\{c\}}^{\top}\\(C_{\{c\}}A^{(N)}_2)^{\top}
    \end{bmatrix},\tilde{b}=\begin{bmatrix}-A_1^{(1)}\\0\\ \vdots\\-A_1^{(N)}\\0\end{bmatrix}.
\end{align*}
The definitions for $\tilde{\tilde{A}}$ and $\tilde{\tilde{b}}$ are analogous.

{In this paper, we utilize demonstrations whose starting states $x_0^{(i)}$ are sampled by injecting normal noise to a chosen $x_0$ state. The latter is chosen so that sufficient interaction with the constraints is attained. Evidently, if there is no interaction between the demonstrations and the constraints, inference can be inaccurate. Furthermore, we can approximate the Lagrange multipliers associated with each trajectory by a single set of Lagrange multipliers as the demonstrations in our work are close to each other, leading to improved computational efficiency. In case initial conditions spanning the entire state space are used, which can effectively lead to better exploration of the state space, individual Lagrange multipliers for each demonstration would be required.}
\section{Simulations}\label{sec:simul}
In this section, we carry out simulations to quantify the performance of IGCI. 
We utilize a two dimensional navigation task and a robotic arm environment and consider scenaria of perfect and noisy state observations, as well as suboptimal trajectories. To evaluate the quality of the inferred constraints, we utilize the following metrics. Similar to~\cite{chou2020learning}, we report the coverage of the actual safe region and the overlap with the unsafe one. More specifically, we define the constrained and unconstrained regions as
\begin{align}\label{eq:cnstr-uncnstr-regions}
\mathcal{A}&= \bigcup_{ i=1,\ldots,M}\{x| c_{i}^\top x> 0\}\\\label{eq:cnstr-uncnstr-regions2}
\mathcal{A}^c&=\bigcap_{ i=1,\ldots,M}\{x| c_{i}^\top x\leq 0\},
\end{align}
respectively. The corresponding regions constructed using the estimated parameters $\tilde{c}_i, i=1,\ldots,N_c$ are designated with $\tilde{\mathcal{A}}$ and $\tilde{\mathcal{A}}^c$, respectively.  We now define the coverage of the unconstrained set as $\frac{Vol(\tilde{\mathcal{A}^c}\cap \mathcal{A}^c)}{Vol(\mathcal{A}^c)}$  and the overlap of the inferred unconstrained and the actual constrained sets as $\frac{Vol(\tilde{\mathcal{A}^c}\setminus\mathcal{A}^c)}{Vol(\mathcal{A})}$. These metrics can be seen as analogous to the True Positive (TP) and False Positive (FP) classification rates, {and hence they will be called as such}. 
\subsection{2D Navigation Task}
In this set of simulations we study a two dimensional linear system with $x_t\in\mathbb{R}^2, u_t\in\mathbb{R}$, dynamics matrices 
\begin{align}
A=\begin{bmatrix}
0.8&0.1\\0.1&0.8
\end{bmatrix}, B=\begin{bmatrix}
0.1\\0.5
\end{bmatrix},
\end{align}
and quadratic  cost matrices  $Q=100\cdot I_2$ and $R=0.1$.
The goal of the controller is to stay close to a predetermined trajectory $x_{track}$, listed in Appendix~\ref{sec:sim_detail}, for the duration of the task. We evaluate Algorithm~\ref{algo:alter} in an environment with two linear state constraints, shown in Figures~\ref{fig:2_c_no_noise} and \ref{fig:2_c_noise}. The control task has a horizon $T=8$ and the starting states are normally distributed around $[-0.5,-5.5]^{\top}$ with $\sigma=0.05$. We consider two cases, one with perfect and one with imperfect state observations. In all simulations, we vary the number of expert demonstrations $N$, in order to quantify its impact on inference. IGCI is terminated after $2$ iterations as for subsequent iterations the KKT cost function, as seen in Figure~\ref{fig:conv_rate_2d}, no longer decreases significantly. For all simulations in this section, we followed the elbow rule for the termination of the algorithm, without explicitly choosing a threshold $\delta$. For each experiment, we carry out $10$ independent simulations with newly obtained expert demonstrations. 

We first investigate constraint inference under perfect state observations. In Figure~\ref{fig:2_c_no_noise}, we plot the expert demonstrations along with the actual and estimated constraint regions, for $N=1$ and $N=10$. We denote with $OC$ and $OCR$ the original affine constraint and corresponding region, respectively. $EC$ and $ECR$ designate the estimated affine constraint and constraint regions. 
\begin{figure}[h]
\begin{subfigure}[t]{.24\textwidth}\centering
\includegraphics[width=1.0\columnwidth]{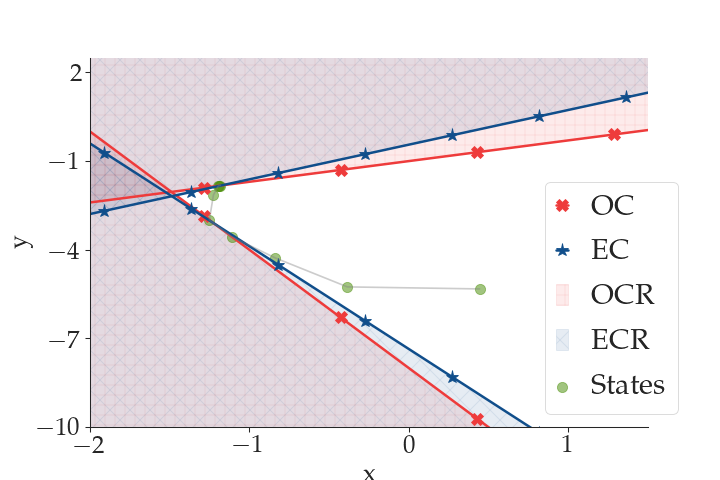}
\caption{$N=1.$}
\label{fig:2d_noiseless_1}
\end{subfigure}%
\hspace{-0.4cm}
\begin{subfigure}[t]{.24\textwidth}\centering
\includegraphics[width=1.0\columnwidth]{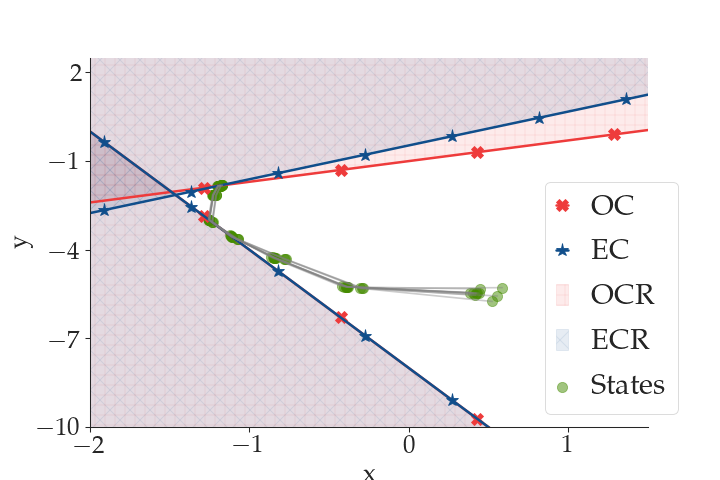}
\caption{$N=10.$}
\label{fig:2d_noiseless_5}
\end{subfigure}%
\caption{Examples of original and estimated constrained regions for $N=1$ (Fig.~\ref{fig:2d_noiseless_1}) and $N=10$ (Fig.~\ref{fig:2d_noiseless_5}) along with expert demonstrations under perfect state observations.}
\label{fig:2_c_no_noise}
\end{figure}

In the case of noisy state observations, we assume that each demonstration $i$ is polluted with observation noise as follows
\begin{align}
    \tau_i = \{x_0,u_0,x_1+\epsilon,u_1,x_2+\epsilon,\dots,u_{T-1},x_{T}+\epsilon\},
\end{align}
with $\epsilon\sim\mathcal{N}(0,0.005)$. In Figure~\ref{fig:2_c_noise} we plot the original constraints along with an instance of the inferred ones for $N=1$ and $N=10$, respectively. For $N=1$, we showcase one of the $10$ simulations in which inference for one of the constraints was inaccurate. {Clearly, for both perfect and noisy state observations, a larger number of demonstrations, which possibly results in more interactions with the constraints, leads to more accurate inference.} Figure~\ref{fig:class_vols_2d} contains the TP and FP rates, as discussed in the next section.
\begin{figure}[h]
\begin{subfigure}[t]{.24\textwidth}\centering
\includegraphics[width=1.0\columnwidth]{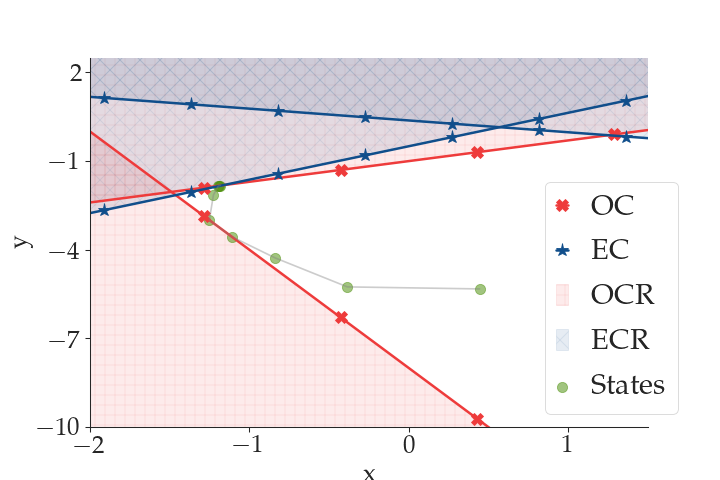}
\caption{$N=1.$}
\label{fig:2d_n1_noise}
\end{subfigure}%
\hspace{-0.4cm}
\begin{subfigure}[t]{.24\textwidth}\centering
\includegraphics[width=1.0\columnwidth]{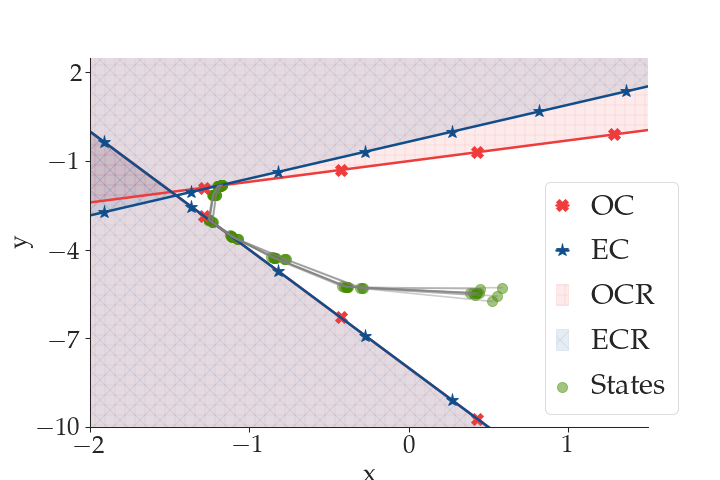}
\caption{$N=10.$}
\label{fig:2d_n5_noise}
\end{subfigure}%
\caption{Examples of original and estimated constrained regions for $N=1$ (Fig.~\ref{fig:2d_n1_noise}) and $N=10$ (Fig.~\ref{fig:2d_n5_noise}) along with expert demonstrations under noisy state observations.}
\label{fig:2_c_noise}
\end{figure}
\vspace{-0.5cm}
\subsection{Baseline Comparison}
To emphasize the benefits of inferring constraint parameters and not just safe/unsafe regions, we compare IGCI with the inference method presented in~\cite{chou2020learning}. The latter, infers regions of the state space that are deemed constrained or unconstrained. We solve Problem 5 from~\cite{chou2020learning} in order to obtain an estimate of the unconstrained region for our two dimensional navigation task, in the cases of perfect and imperfect state observations. We utilize $10$ different query points to obtain the estimate of the unconstrained set. 
\begin{figure}[h]
\centering
\includegraphics[width=0.33\textwidth]{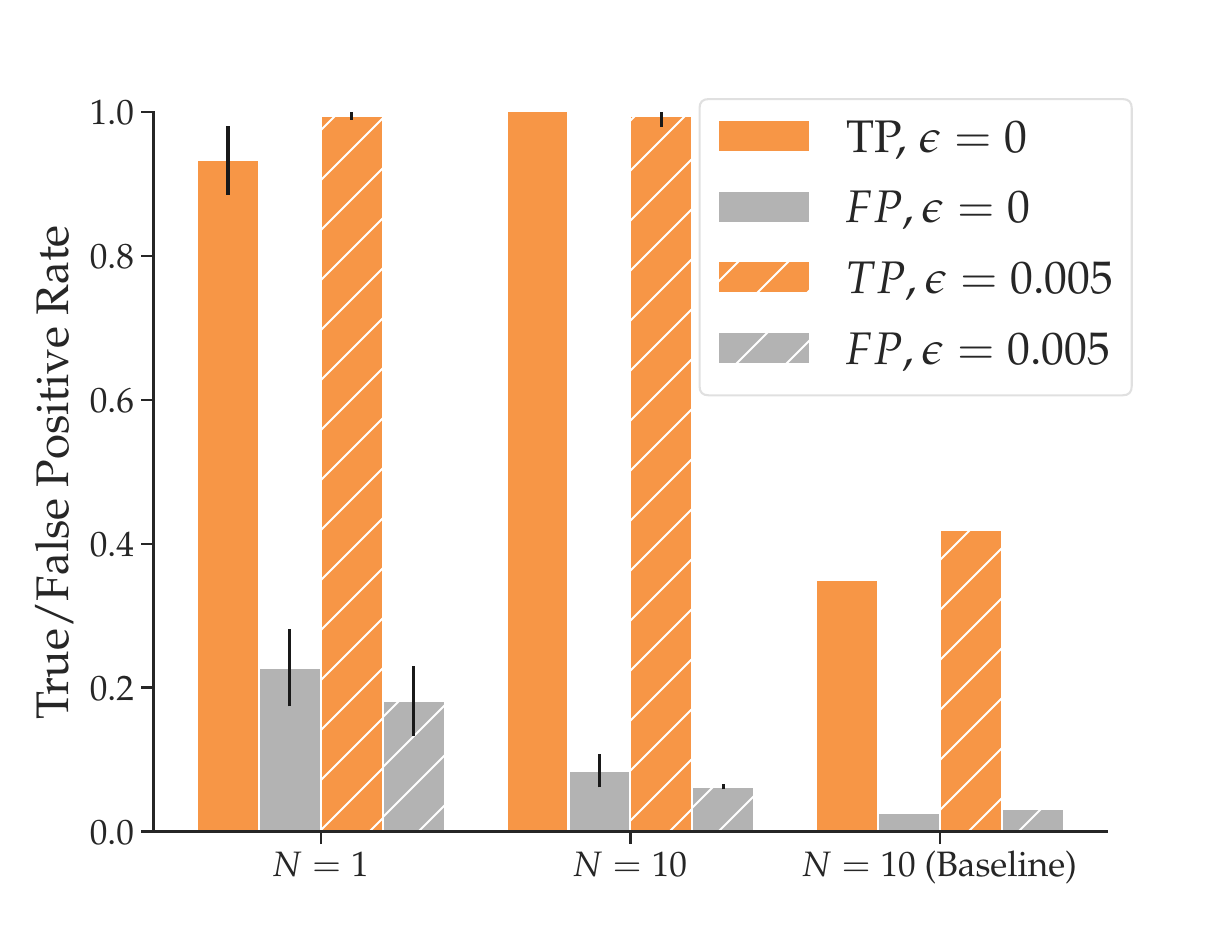}
\label{fig:viol_noiseless}
\caption{Classification rates for IGCI and baseline under perfect and noisy state observations. Results averaged over $10$ simulations. When $N=10$, our method has significantly higher true positive rate than the baseline method while maintaining same-level of false positive rates.}
\label{fig:class_vols_2d}
\end{figure}
Evaluation of the algorithms is carried out using the TP and FP classification rates. 

The results for both the noiseless and noisy state observations
are given in Figure~\ref{fig:class_vols_2d}. Increasing the number of available trajectories naturally leads to better estimation, which by itself allows for safer policies. The baseline method, which we only implemented in the more favorable case of $N=10$ demonstrations, only manages to recognize rectangular areas around the trajectories and hence fails to identify the majority of the unconstrained regions ``to the right" of the demonstrations. On the other hand, our approach manages to infer the majority of the unconstrained region while having a false positive rate similar to the more conservative baseline. It should be stressed, that our method returns the full parameterization of the constraints and hence a new forward optimization problem is still convex. On the other hand, the baseline method infers rectangular regions, the union of which need not be a convex set, hence, requiring more complex techniques for novel forward problems in the inferred domain. Finally, the baseline method requires the solution either of a nonlinear program or a MILP, which can be significantly more inefficient when compared to our alternating approach.
\subsection{3D Manipulation Task with Suboptimal Demonstrations}
In this section, we utilize the Fetch-Reach robotic simulation~\cite{gymnasium_robotics2023github} which is designed to complete tasks, like pushing and grabbing objects, in three dimensional environments. The control inputs are the desired displacement of the robot gripper in the $x,y$ and $z$ coordinates, $x_t\in\mathbb{R}^3, u_t\in\mathbb{R}^3$. We assume that the nominal dynamics matrices, the ones {the end effector} of the robot theoretically adheres to,  are $A={I}_3$ and $B={I}_3$ and the quadratic cost matrices are $Q=10\cdot{I}_3$ and $R=10\cdot{I}_3$. 

It should be noted that the nominal dynamics are a simplification of the actual dynamics, as the underlying physics engine is far more complicated and for a particular control input the arm gripper will not land in the predicted state by our nominal model but at a state in the proximity of the former. For this reason, we study two cases, one with nominal trajectories and one with the actual robot trajectories. To obtain the latter, after obtaining an optimal trajectory by solving~\eqref{prob:lqr} using the nominal dynamics, we utilize a closed loop controller between optimal waypoints to guide the gripper close to the nominal states (robot states). The controller is run for $5$ iterations for each waypoint and its action is determined by the difference between its current state and the waypoint. Clearly, these trajectories, shown in Figure~\ref{fig:3D_traj}, are no longer optimal according to~\eqref{prob:lqr}. 

The control task is for the end effector to stay close to a tracking trajectory, which can be found in Appendix~\ref{sec:sim_detail}, over a horizon of $T=8$. The ground truth underlying constraints, schematically pictured in Figure~\ref{fig:fetch}, are $x\geq 0.9$ and $y\leq 1.1$, which can also be written as $c_i^{\top}x\leq0,\;i=1,2$ with $c_1=[-1,0,0,0.9]^{\top}$ and $c_2=[0,1,0,-1.1]^{\top}$. The boundaries of the entire domain are $x\in[0.4,1.4]^{\top}, y\in[0.4,1.4]^{\top}$ and $z\in[0.4,1.4]^{\top}$. We gather $N=1$ and $N=10$ expert trajectories with the starting states being normally distributed around $[1,1,0.8]$ with $\sigma=0.05$.
\begin{figure}[t!]
\begin{subfigure}[t]{.25\textwidth}\centering
  \includegraphics[trim={0cm 0cm 0cm 0.5cm},clip,width=.7\columnwidth]{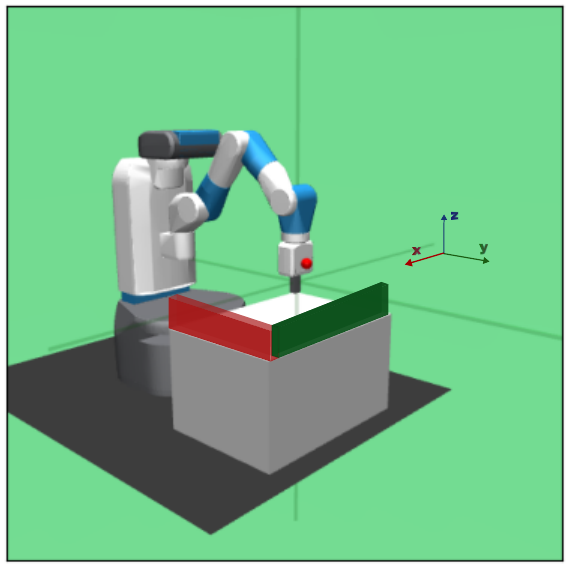}
  \caption{Fetch-Reach robot.}
  \label{fig:fetch}
\end{subfigure}%
\begin{subfigure}[t]{.25\textwidth}\centering
\hspace*{-1.08cm} 
  \includegraphics[trim={3.05cm 1.0cm 0cm 0cm},clip,width=1.006\columnwidth]{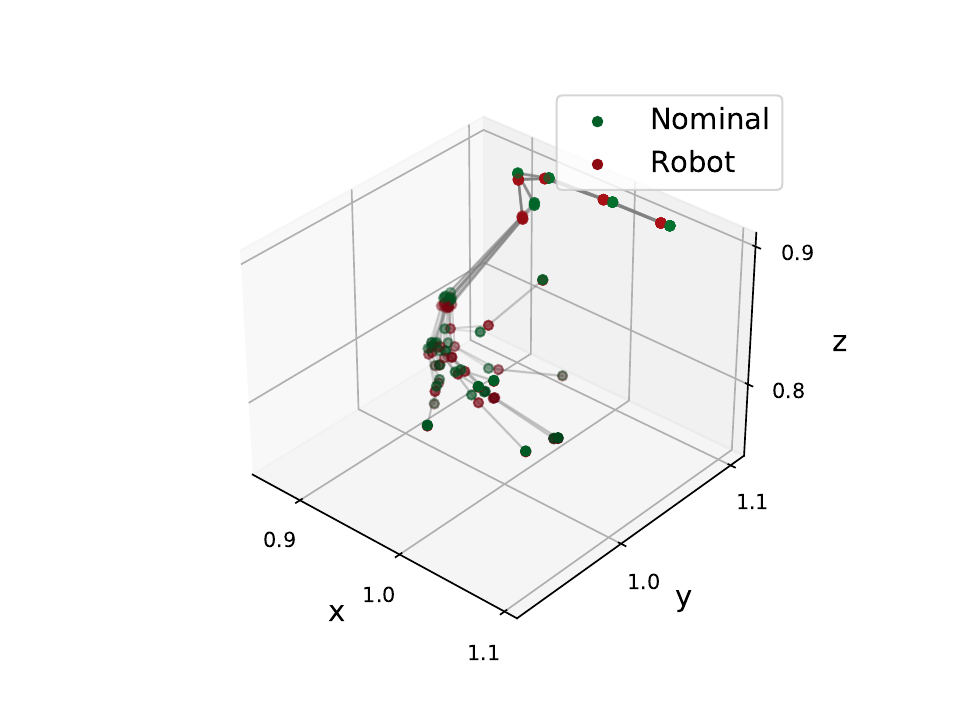}
  \caption{End effector trajectories.}
   \label{fig:3D_traj}
\end{subfigure}
\caption{(\ref{fig:fetch}) Fetch-Reach robot along with constraints (red, green). (\ref{fig:3D_traj}) Nominal and corresponding actual robot trajectories for $N=10$.}
\label{fig:3D_example}
\end{figure}
Figure~\ref{fig:3D_traj} shows $N=10$ trajectories in which the green points correspond to the nominal trajectories and the red ones are the actual robot states. We infer the constraints by using both  the nominal and the suboptimal demonstrations, in order to quantify the performance of our method in view of suboptimality.  We run IGCI for $10$ independent simulations and we report the average classification rates in Figure~\ref{fig:vols_3d}. Clearly, estimation is effective even by utilizing a single expert demonstration. 
\begin{figure}[t!]
\centering
\includegraphics[trim = 0cm 1cm 0cm 1cm, width=0.31\textwidth]{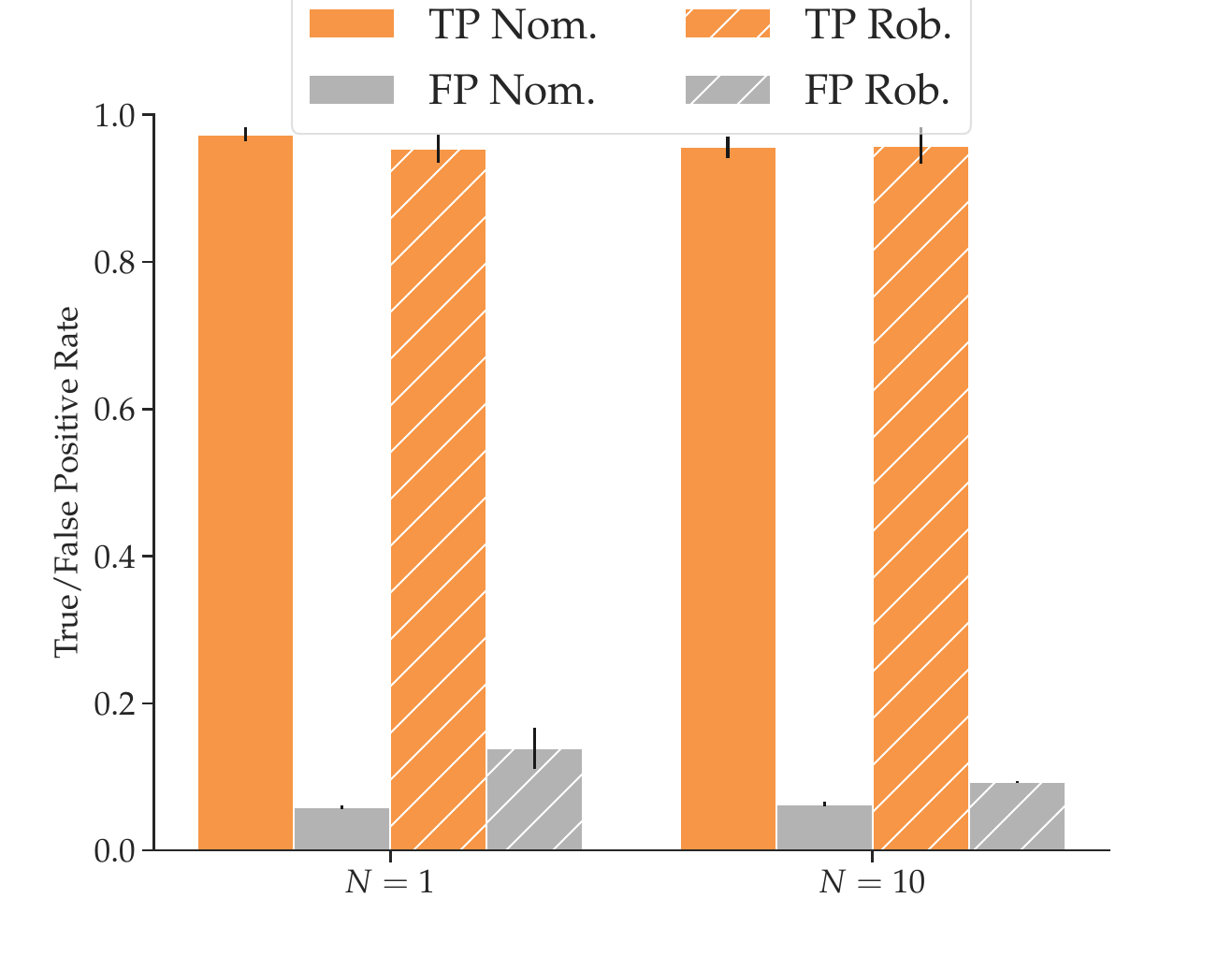}
\label{fig:viol_noiseless}
\caption{Classification rates for nominal and robot trajectories. The robot trajectories are considered to be suboptimal, because of the model discrepancy between the nominal dynamics and the real robot dynamics in the Fetch-Reach simulation environment \cite{gymnasium_robotics2023github}. Results are averaged over $10$ independent simulations. Our method is still effective even when the trajectories are suboptimal.}
\label{fig:vols_3d}
\end{figure}
In Figure~\ref{fig:3d_cnstrs} we plot the actual and the inferred constraints as obtained for $N=10$ nominal trajectories. Figure~\ref{fig:conv_rate_3d} displays the convergence rate of the KKT residual objective with respect to the number of iterations. As expected, for optimal demonstrations the KKT cost function is lower than the corresponding for suboptimal  demonstrations.
\begin{figure}[H]
\centering
\includegraphics[trim={0cm 0.0cm 0 0cm},clip,width=0.27\textwidth]{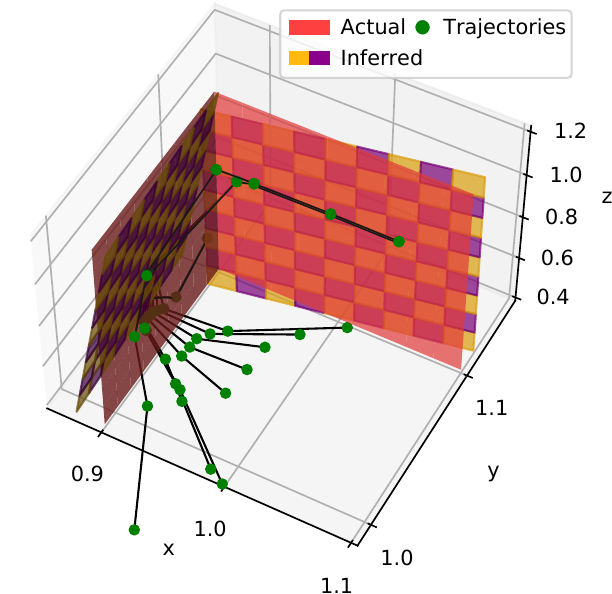}
\label{fig:viol_noiseless}
\caption{Nominal trajectories ($N=10$) of the end effector along with actual and inferred constraints. The inferred constraints align well with the actual constraints.}
\label{fig:3d_cnstrs}
\end{figure}
\begin{figure}[H]
\begin{subfigure}[t]{.5\columnwidth}
\hspace{0.3cm}\includegraphics[width=.76\columnwidth]{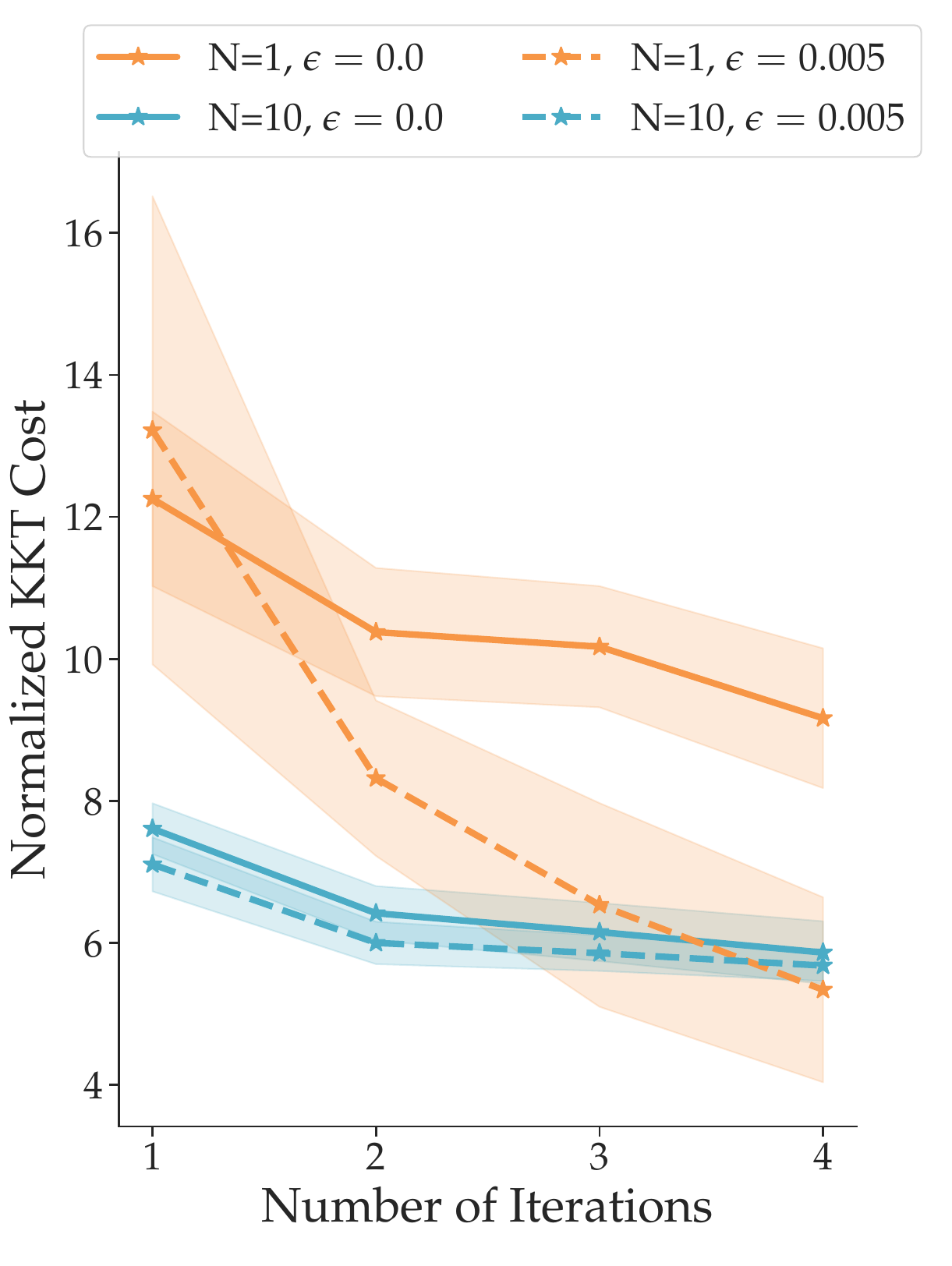}
\caption{}
\label{fig:conv_rate_2d}
\end{subfigure}
\hspace{-0.5cm}
\begin{subfigure}[t]{.5\columnwidth}
\hspace{0.3cm}\includegraphics[width=.74\columnwidth]{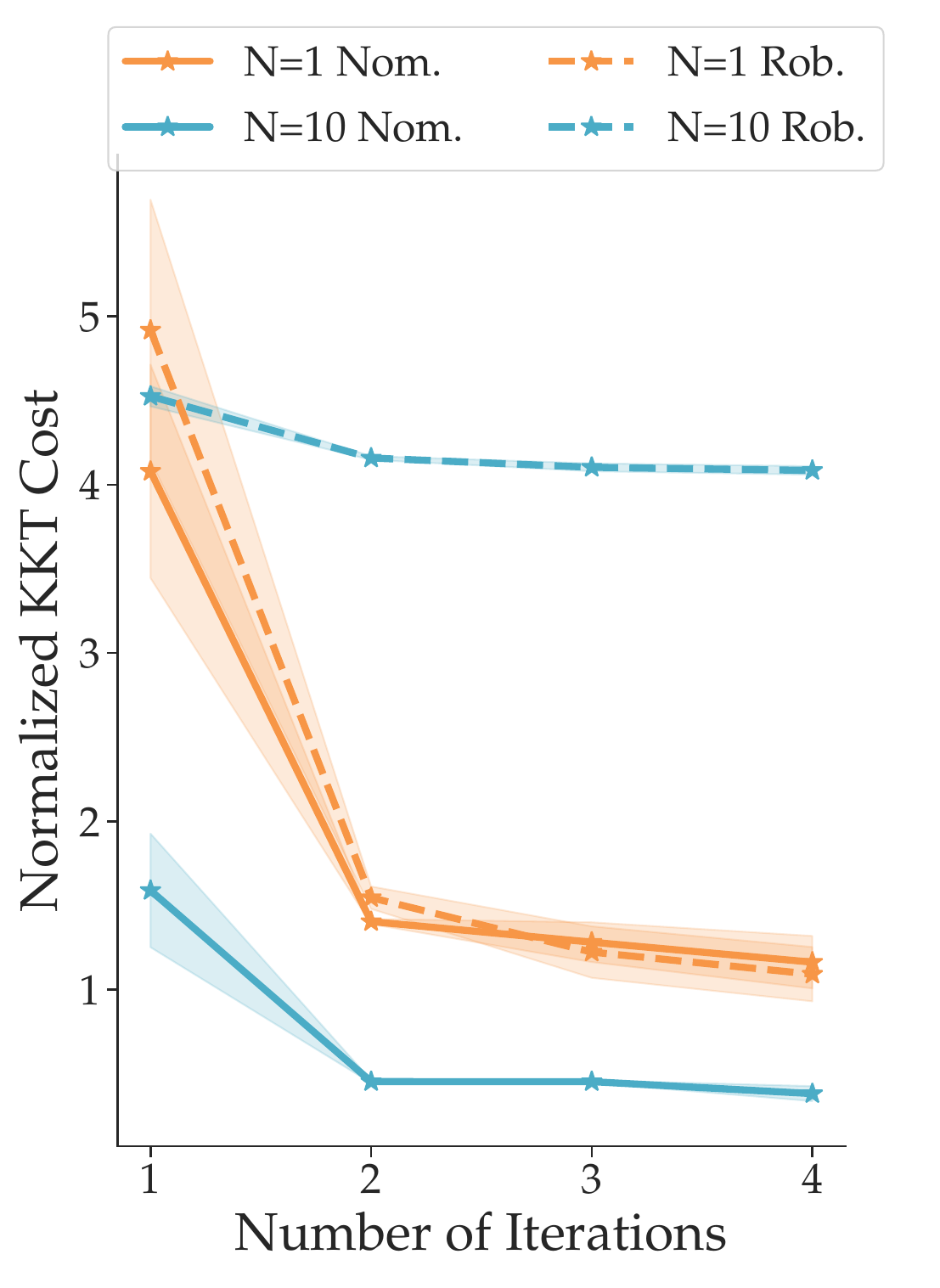}
\caption{}
\label{fig:conv_rate_3d}
\end{subfigure}
\caption{KKT residual cost function, normalized by the number of trajectories, with respect to iterations for the 2D navigation task~(Fig.~\ref{fig:conv_rate_2d}) and 3D manipulation task~(Fig.~\ref{fig:conv_rate_3d}). Results averaged over $10$ simulations.} 
\label{fig:conv_rates}
\end{figure}



\section{Conclusion and Future Extensions}\label{sec:conclusion}
In this work, we presented a method that infers constraints in control tasks after observing expert demonstrations. We showed conditions under which exact recovery is possible. We proposed to learn the unknown constraints by minimizing the KKT residual objective function. We further introduced IGCI, an algorithm that can be utilized to infer constraints in cases where no information, except for their parametric representation, is available. 
We evaluated the performance of IGCI in a number of simulations, in which we showed that the inferred constraints can be accurate without jeopardizing safety under perfect, noisy and suboptimal observations. 

The alternating minimization procedure presented in this paper is flexible enough to handle more complicated constraints that can be modeled as unions and intersections of half-spaces. {Another promising direction is designing control inputs to actively gather information about unknown convex or even non-convex constraints in potentially nonlinear control problems. } 

\bibliographystyle{IEEEtran}
\bibliography{IEEEabrv,references}
\appendix

\subsection{Simulation Details}\label{sec:sim_detail}
Although IGCI is not very sensitive in the choice of $\rho_1,\rho_2$, in all simulations we use $\rho_1=10$ and $\rho_2=0$. \textit{2D Navigation Task}:
The tracking trajectory is $x_{track}=[0.,-5.5;-0.6,-4.5;-1.5,-3.8;-1.5,-3.5;-1.6,-2.5;\\-0.2,-2;0.5,-2;0.8,-1.5]^{\top}$. 

\noindent\textit{Gym Environment}:
The tracking trajectory is $x_{track}=[0.95,1,0.8;0.9,1,0.8;0.85,1,0.8;0.8,1.1,0.9;0.9,1.15,\\0.9;0.95,1.2,0.9;1,1.25,0.9;1.1,1.3,0.9]^{\top}$.

\end{document}